\documentclass[11pt]{article}

\usepackage[margin=1in]{geometry}
\usepackage{times}
\usepackage{microtype}
\usepackage{url}
\usepackage[hidelinks]{hyperref}

% ---------- math / theorems ----------
\usepackage{amsmath,amssymb,amsfonts}
\usepackage{amsthm}
\usepackage{mathtools}

% ---------- figures / tables ----------
\usepackage{graphicx}
\usepackage{subcaption}
\usepackage{booktabs}
\usepackage{multirow}

% ---------- algorithms ----------
\usepackage{algorithm}
\usepackage{algpseudocode}

\usepackage{enumitem}

% ---------- theorem environments ----------
\theoremstyle{plain}

\newtheorem{lemma}{Lemma}
\newtheorem{proposition}{Proposition}

\theoremstyle{definition}

\theoremstyle{remark}

% ---------- macros----------

% ---flowchart ---
\usepackage{tikz}
\usetikzlibrary{arrows.meta,positioning,shapes.geometric}

% ---------- title ----------
\title{\bfseries Adaptive Learning Guided by Bias-Noise-Alignment Diagnostics}
\author{
Akash Samanta\thanks{Corresponding author: Akash Samanta (\texttt{akash.samanta@ontariotechu.net}).} \
and Sheldon Williamson
\\
Department of Electrical, Computer and Software Engineering, Ontario Tech University, Oshawa, Canada\\
}

\date{}

\begin{document}
\maketitle

% ---------- abstract ----------

\begin{abstract}
Learning systems deployed in nonstationary and safety-critical environments often suffer from instability, slow convergence, or brittle adaptation when learning dynamics evolve over time. While modern optimization, reinforcement learning, and meta-learning methods adapt to gradient statistics, they largely ignore the temporal structure of the error signal itself. This paper proposes a diagnostic-driven adaptive learning framework that explicitly models error evolution through a principled decomposition into \emph{bias}, capturing persistent drift; \emph{noise}, capturing stochastic variability; and \emph{alignment}, capturing repeated directional excitation leading to overshoot. These diagnostics are computed online from lightweight statistics of loss or temporal-difference (TD) error trajectories and are independent of model architecture or task domain. We show that the proposed bias-noise-alignment decomposition provides a unifying control backbone for supervised optimization, actor-critic reinforcement learning, and learned optimizers. Within this framework, we introduce three diagnostic-driven instantiations: the Human-inspired Supervised Adaptive Optimizer (HSAO), Hybrid Error-Diagnostic Reinforcement Learning (HED-RL) for actor-critic methods, and the Meta-Learned Learning Policy (MLLP). Under standard smoothness assumptions, we establish bounded effective updates and stability properties for all cases. Representative diagnostic illustrations in actor-critic learning highlight how the proposed signals modulate adaptation in response to TD error structure. Overall, this work elevates error evolution to a first-class object in adaptive learning and provides an interpretable, lightweight foundation for reliable learning in dynamic environments.
\end{abstract}

% ---------- keywords ----------
\paragraph{Keywords:}
adaptive learning; bias-noise-alignment; nonstationary environments; reinforcement learning; learned optimizers; stability; error evolution; interpretable diagnostics.

% =========================================================
\section{Introduction}
Modern learning systems are increasingly deployed in environments that are nonstationary, noisy, and safety-critical, where the statistical properties of data and objectives evolve over time. In such settings, learning dynamics are often dominated not only by stochastic gradients but also by structured changes in operating conditions, feedback reliability, and temporal dependencies. Examples include sequential prediction under regime shifts, control under uncertain dynamics, and continual or task-adaptive learning. Despite remarkable
progress in optimization and reinforcement learning (RL), ensuring stable, interpretable, and reliable adaptation under these conditions remains a fundamental challenge.

Most widely used optimization algorithms adapt learning rates based on instantaneous or exponentially averaged gradient statistics \cite{rubio2017convergence}. Methods such as Adam and AdamW \cite{kingma2014adam,loshchilov2017decoupled, zaheer2018adaptive}, large-batch optimizers such as LAMB \cite{you2019large}, sign-based methods such as Lion~\cite{chen2023lion}, and second-order approximations such as Sophia~\cite{liu2023sophia} have proven effective across a wide range of applications. However, these methods primarily react to gradient magnitude and variance, without explicitly accounting for how the \emph{error signal itself evolves} during training. In nonstationary settings, this limitation can lead to unstable convergence, oscillatory behavior, or overly conservative adaptation, particularly in long-horizon sequence models and control-oriented learning problems. Related challenges arise in RL, where training stability is strongly influenced by the structure of temporal-difference (TD) errors. Stabilization techniques such as trust-region methods~\cite{schulman2015trust}, proximal policy optimization (PPO)~\cite{schulman2017proximal}, entropy regularization, and advantage estimation reduce variance and catastrophic updates, but they do not explicitly distinguish between systematic error drift and stochastic
fluctuations in the learning signal. As a result, policy learning remains highly sensitive to hyperparameters, noise, and implementation details, particularly
under changing dynamics or partial observability~\cite{henderson2018deep}.

Meta-learning and learned optimizers provide another avenue for adaptation by learning update rules or initializations that generalize across tasks. Approaches such as Model-agnostic meta-learning (MAML)~\cite{finn2017model}, Meta-SGD~\cite{li2017meta}, Reptile, and recurrent learned optimizers~\cite{andrychowicz2016learning} demonstrate that optimization itself can be learned from data. While powerful, these methods typically condition on gradients and internal hidden states, offering limited interpretability of \emph{why} or \emph{when} adaptation occurs. Moreover, without explicit mechanisms to assess the reliability of the error signal, learned optimizers may still exhibit brittle behavior under unseen or highly nonstationary conditions. A common thread across supervised learning, RL, and meta-learning is that adaptation decisions are largely driven by gradient-based signals, while the temporal structure of the error signal is treated as incidental. In contrast, biological and human motor learning exhibit a qualitatively different behavior, adaptation is modulated based on perceived error trends, variability, and repeated overshoot rather than instantaneous feedback alone. Persistent drift triggers corrective action, high variability leads to cautious updates, and repeated overshoot prompts attenuation of movement amplitude. These observations suggest that \emph{error evolution} contains interpretable structure that can serve as a principled control signal for adaptive learning.

Motivated by this perspective, we propose a diagnostic-driven adaptive
learning framework that explicitly models error dynamics through a
simple yet expressive decomposition into three components:
\emph{bias}, capturing persistent error drift;
\emph{noise}, capturing stochastic variability around the trend; and
\emph{alignment}, capturing systematic overshoot arising from repeated
update directions. These diagnostics are computed online from
lightweight statistics of loss or TD-error evolution and are independent of model architecture or task domain. Rather than replacing gradient-based learning, the proposed diagnostics complement gradients by regulating update magnitude, direction, and exploration based on the reliability and structure of the learning signal. The central contribution of this work is to demonstrate that a bias-noise-alignment decomposition of error dynamics provides a unifying backbone for adaptive learning across multiple paradigms. We show that the same diagnostic signals can be systematically reused to stabilize supervised optimization, regulate actor-critic RL, and condition learned optimizers for rapid task adaptation. By explicitly elevating error evolution to a first-class object, this framework bridges ideas from optimization, control, and meta-learning within a single interpretable perspective. Under standard smoothness assumptions, we establish bounded effective updates and descent-style stability guarantees for diagnostic-driven adaptation. The proposed approach does not replace existing optimizers or control algorithms; rather, it enhances their robustness when learning dynamics become
biased, noisy, or nonstationary. The major contributions of this work are summarized below. 
\begin{itemize}
  \item We introduce a lightweight \emph{bias-noise-alignment}
  decomposition of error dynamics that is model-agnostic and computable
  online from loss or TD-error trajectories.
  \item We instantiate a single diagnostic framework across supervised   learning, actor-critic RL, and meta-learned optimizers, demonstrating its role as a unifying mechanism for adaptive learning.
  \item We establish theoretical stability properties, including bounded effective updates and descent-style behavior under standard smoothness assumptions.
  \item We provide representative diagnostic illustrations showing how diagnostic-driven adaptation modulates learning dynamics to promote stability and robustness in nonstationary settings.
\end{itemize}

\section{Review of Literature}

This work connects adaptive optimization, RL stabilization, and meta-learning. We briefly review relevant literature
in these areas and highlight the limitations that motivate a diagnostic-driven perspective on adaptive learning.

\subsection{Adaptive Optimization Methods}
Stochastic gradient-based optimization remains the backbone of modern machine learning. Classical methods such as SGD with momentum adapt update directions using first-order information, while adaptive methods such as Adam and AdamW~\cite{kingma2014adam,loshchilov2017decoupled} further
normalize updates using gradient moments. These approaches provide robustness to scale differences across parameters and have become standard in training deep networks \cite{ioannou2023adalip}. Extensions such as AdaFactor~\cite{shazeer2018adafactor}, LAMB~\cite{you2019large}, and Lion~\cite{chen2023lion} target improved scalability, memory efficiency, or stability in large-batch and
large-model regimes. More recently, sharpness-aware optimization methods such as SAM~\cite{foret2020sharpness} and its variants have been proposed to improve generalization by explicitly seeking flat minima. While these methods introduce an implicit sensitivity to local loss geometry, they typically rely on inner maximization steps or gradient perturbations and do not explicitly model the temporal evolution of the error
signal. Second-order or quasi-second-order approaches, including Sophia~\cite{liu2023sophia}, approximate curvature information to improve convergence speed, but remain primarily reactive to instantaneous gradient statistics. A common limitation across these methods is that adaptation decisions are driven by gradient magnitude, variance, or curvature proxies, rather than by structured information about how the error evolves over time. As a result, these optimizers do not explicitly distinguish between persistent error drift, stochastic fluctuations, or repeated overshoot, which can lead to instability or overly conservative updates under nonstationary learning conditions.

\subsection{Stability in Reinforcement Learning}
RL introduces additional challenges due to delayed rewards, bootstrapping, and nonstationary data distributions induced by policy
updates ~\cite{dulac2021challenges}. actor-critic methods mitigate variance by learning a value function, while advantage estimation techniques further stabilize gradient estimates.
Trust-region approaches such as TRPO~\cite{schulman2015trust} constrain policy
updates to avoid destructive changes, and  PPO~\cite{schulman2017proximal} provides a computationally efficient approximation through clipped objectives. Entropy regularization is commonly used to encourage exploration and prevent premature convergence. However, entropy coefficients are typically fixed or scheduled heuristically, without explicit consideration of the structure of the TD error. Empirical studies have shown that deep RL methods remain highly sensitive to hyperparameters, random seeds, and implementation details~\cite{henderson2018deep}, particularly under noisy or changing dynamics. While existing stabilization techniques limit update magnitude or variance, they do not explicitly distinguish whether instability arises from systematic bias in TD errors or from stochastic noise. Consequently, learning algorithms may respond uniformly to fundamentally different error regimes, motivating the need for diagnostic signals that can disentangle these effects.

\subsection{Meta-Learning and Learned Optimizers}
Meta-learning aims to enable rapid adaptation across tasks by learning how to learn. MAML~\cite{finn2017model} focuses on learning an initialization that supports fast task-specific adaptation using standard gradient descent. Extensions such as Meta-stochastic gradient descent (Meta-SGD)~\cite{li2017meta} jointly learn initialization and per-parameter learning rates, improving flexibility and
adaptation speed. An alternative line of work treats the optimization process itself as a learning problem \cite{chen2022learning}. Learned optimizers parameterize update rules using neural networks and are trained through meta-gradient descent across tasks~\cite{andrychowicz2016learning}. Recurrent and coordinate-wise learned optimizers have demonstrated impressive data efficiency in few-shot and continual learning settings. Despite these advances, learned optimizers often rely on opaque internal states and gradient inputs, making it difficult to interpret or reason about their adaptation behavior. Moreover, without explicit mechanisms to assess the reliability of the learning signal, learned optimizers may generalize poorly to tasks with substantially different noise characteristics or nonstationary dynamics. This limitation suggests that conditioning learned updates on interpretable diagnostics of error structure may improve robustness and transparency.

\subsection{Human-Inspired and Control-Theoretic Perspectives}

A broad class of learning algorithms has been motivated by principles drawn from human cognition, motor learning, and classical control theory. Representative examples include gain scheduling and adaptive step-size control in optimization, uncertainty-aware exploration and entropy modulation in RL, and curriculum or experience-driven adaptation in meta-learning. These approaches have demonstrated empirical benefits in stabilizing training and improving convergence in complex, nonstationary environments. Despite their success, most existing human-inspired methods incorporate adaptivity implicitly or heuristically. In practice, adaptation is often driven by gradient magnitude, loss curvature, or hand-designed schedules, without an explicit characterization of the reason behind the unstable learning. From a control-theoretic perspective, however, instability is rarely monolithic. Persistent bias such as systematic drift, stochastic noise such as measurement or process variability, and repeated directional excitation such as overshoot or resonance are treated as distinct phenomena, each requiring different compensatory mechanisms. When these effects are conflated, corrective actions may be either overly aggressive or excessively conservative. A similar limitation is observed across adaptive optimization, RL stabilization, and meta-learning. Although these methods provide powerful tools for learning in high-dimensional and uncertain settings, they remain largely gradient-centric and do not explicitly model the temporal structure of error evolution. In particular, they do not distinguish between systematic and stochastic sources of instability, nor do they provide an interpretable signal for modulating update magnitude, direction, or exploration based on error reliability.

The present work addresses this gap by introducing a diagnostic decomposition of error dynamics into \emph{bias}, \emph{noise}, and \emph{alignment} components. Inspired by both human motor learning and control-theoretic reasoning, these diagnostics explicitly separate persistent drift, stochastic variability, and directional overshoot phenomena that humans naturally compensate for by adjusting step size, movement direction, and caution under uncertainty. Rather than serving as ad hoc heuristics, the proposed diagnostics form a unified and interpretable control signal that regulates learning across supervised optimization, actor-critic RL, and meta-learned optimizers. In this sense, the framework operationalizes human-inspired adaptation through a principled, task-agnostic representation of error evolution.

\section{Diagnostic Decomposition of Error Dynamics}
\label{sec:diagnostics}

This section discusses the diagnostic framework used in this study to explicitly model the temporal evolution of error during learning \cite{lewkowycz2020training}. Rather than relying solely on instantaneous gradients, we extract interpretable signals from the dynamics of the loss or TD error and use them as lightweight controls for adaptive learning. The proposed decomposition separates error dynamics into three complementary components namely, \emph{bias}, \emph{noise}, and \emph{alignment}. These diagnostics are general, model-agnostic, and applicable across supervised learning, RL, and meta-learning. The details of each mechanism are discussed in the section below.

\subsection{Error Evolution Signal}

Let $\ell_t$ denote a scalar learning signal evaluated at iteration $t$. In supervised learning, $\ell_t$ may correspond to a training or validation loss, while in RL it may represent a TD error or a batch-level surrogate thereof. Rather than treating $\ell_t$ as an isolated quantity, we focus on its temporal evolution. We define the incremental error signal which captures whether learning progress is improving, stagnating, or degrading over time. Unlike raw gradients, $\Delta \ell_t$ directly reflects changes in performance and aggregates the effects of optimization, stochasticity, and model mismatch.
\begin{equation}
\Delta \ell_t = \ell_t - \ell_{t-1},
\label{eq:delta_loss}
\end{equation}
 Using the incremental signal $\Delta \ell_t$ removes sensitivity to absolute loss scale and focuses the diagnostics on learning dynamics rather than task-dependent loss magnitude. To enable online computation, we track simple exponentially weighted moving averages (EMAs) of $\Delta \ell_t$, including its mean (trend), absolute magnitude, and residual variance. These statistics form the basis of the bias and noise diagnostics introduced below. Importantly, the error evolution signal does not depend on the dimensionality of the parameter space and introduces negligible computational overhead.

\subsection{Bias Diagnostic}

The \emph{bias diagnostic} is designed to capture persistent drift in the error signal. Sustained positive or negative values of $\Delta \ell_t$ indicate that updates are systematically moving the model away from or toward improved performance, respectively. We define an EMA of the error increment as
\begin{equation}
b_t = (1 - \alpha) b_{t-1} + \alpha \, \Delta \ell_t,
\label{eq:bias_ema}
\end{equation}
where $\alpha \in (0,1)$ controls the time scale of adaptation. The magnitude of $b_t$ reflects the degree of persistent error drift. To normalize the bias signal and make it scale-invariant, we define the bias ratio as
\begin{equation}
\rho^{\text{bias}}_t =
\frac{|b_t|}{\varepsilon + \nu_t},
\label{eq:bias_ratio}
\end{equation}
where $\nu_t$ is a volatility estimate defined in the next subsection and $\varepsilon > 0$ is a small constant for numerical stability. Intuitively, a large value of $\rho^{\text{bias}}_t$ indicates that the error is consistently drifting in one direction relative to its typical variation. In such regimes, aggressive updates may amplify systematic overshoot, motivating more conservative adaptation.

\subsection{Noise Diagnostic}

While bias captures long-term drift, learning dynamics are often dominated by short-term stochastic fluctuations arising from noisy gradients, partial observability, or random exploration. The \emph{noise diagnostic} is designed to quantify this variability independently of persistent trends. We first track an EMA of the absolute error increment as
\begin{equation}
\nu_t = (1 - \beta) \nu_{t-1} + \beta \, |\Delta \ell_t|,
\label{eq:volatility_ema}
\end{equation}
and an EMA of the squared residual as
\begin{equation}
\sigma_t^2 = (1 - \zeta) \sigma_{t-1}^2
+ \zeta \, (\Delta \ell_t - b_t)^2,
\label{eq:variance_ema}
\end{equation}
where $\beta, \zeta \in (0,1)$ are smoothing parameters. Further, we define the noise ratio as
\begin{equation}
\rho^{\text{noise}}_t =
\frac{\sqrt{\sigma_t^2}}{\varepsilon + |b_t|}.
\label{eq:noise_ratio}
\end{equation}

A large value of $\rho^{\text{noise}}_t$ indicates that stochastic variability dominates systematic drift, suggesting that the learning signal is unreliable. In such regimes, reducing step sizes or exploration intensity can improve stability.

\subsection{Alignment Diagnostic}
Bias and noise capture scalar properties of the error evolution, but do not account for the directional structure of parameter updates. Repeated overshoot can occur when gradients consistently align with accumulated momentum, leading to oscillations or divergence even when bias and noise are moderate. To capture this effect, we define an alignment diagnostic based on the
cosine similarity between the current gradient $g_t$ and a momentum vector
$m_t$:
\begin{equation}
s_t = (1 - \lambda) s_{t-1}
+ \lambda \,
\frac{\langle g_t, m_t \rangle}
{\|g_t\| \, \|m_t\| + \varepsilon},
\label{eq:alignment}
\end{equation}
where $\lambda \in (0,1)$ controls temporal smoothing. Unlike curvature-based measures, this alignment statistic captures repeated directional agreement over time, enabling detection of oscillatory or resonant update behavior without second-order information. The alignment score $s_t \in [-1,1]$ measures persistent agreement between the update direction and accumulated momentum. High positive alignment indicates repeated motion along a dominant direction, which can exacerbate overshoot in curved or delayed-response landscapes. Negative alignment suggests corrective behavior, while near-zero values indicate rapidly changing update directions.

\subsection{Design Principles and Interpretation}

The proposed bias-noise-alignment decomposition provides a compact and interpretable summary of learning dynamics where the bias component captures persistent error drift and signals the need to attenuate update magnitude when systematic overshoot is detected, the noise component captures stochastic variability and motivates conservative adaptation when feedback is unreliable, and the alignment component captures directional repetition and enables correction of oscillatory or resonant update behavior. Crucially, these diagnostics are orthogonal to model architecture, objective function, and learning paradigm. They rely only on scalar error signals and first-order update information, making them computationally lightweight and broadly applicable. Rather than replacing gradients or existing stabilization mechanisms, the diagnostics act as a higher-level control layer that modulates learning behavior based on the structure of the error signal. In the following sections, we show how this diagnostic backbone can be instantiated in supervised optimization, RL, and meta-learning, yielding adaptive algorithms with improved stability, interpretability, and robustness under nonstationary conditions.

% =========================================================
\section{Diagnostic-Driven Supervised Optimization}
\label{sec:hsao}

This section instantiates the proposed diagnostic framework in the context of supervised learning. We introduce the Hybrid Sharpness-Aware Optimizer (HSAO), which uses bias, noise, and alignment diagnostics to regulate learning rates and update directions under nonstationary training conditions. Consider a supervised learning problem with parameters $\theta \in \mathbb{R}^d$ and objective function
\begin{equation}
L(\theta) = \mathbb{E}_{(x,y)\sim\mathcal{D}} \left[ \ell(f_\theta(x), y) \right],
\label{eq:supervised_objective}
\end{equation}
where the data distribution $\mathcal{D}$ may be nonstationary or only partially observed. At iteration $t$, the learner receives a stochastic gradient
\begin{equation}
g_t = \nabla_\theta \ell_t(\theta_t),
\end{equation}
computed from a mini-batch or streaming data. In nonstationary settings, the effective loss landscape encountered during training can evolve over time, exhibiting sharp curvature, delayed response, or oscillatory behavior. Under such conditions, fixed learning rates or gradient-only adaptive schemes may lead to instability, slow convergence, or repeated overshoot. Our goal is therefore to design an optimizer that adapts not only to instantaneous gradient statistics, but also to the temporal structure of the observed error signal.

HSAO extends Adam-style adaptive optimization by introducing diagnostic-driven gates and a directional correction mechanism. As in Adam, we maintain first- and second-order moment estimates
\begin{align}
m_t &= \gamma m_{t-1} + (1 - \gamma) g_t, \label{eq:adam_moment1} \\
v_t &= \eta v_{t-1} + (1 - \eta) (g_t \odot g_t), \label{eq:adam_moment2}
\end{align}
with corresponding bias-corrected versions $\hat{m}_t$ and $\hat{v}_t$. To provide a stable long-term decay while allowing short-term adaptation, we employ a slowly decaying base learning-rate schedule
\begin{equation}
\bar{\alpha}_t = \frac{\alpha_0}{1 + c \log(1 + t)},
\label{eq:base_lr}
\end{equation}
which serves as a conservative reference rate. This base schedule is subsequently modulated using diagnostic information. Specifically, using the bias and noise ratios defined in Section~\ref{sec:diagnostics}, we construct multiplicative gating factors
\begin{align}
\kappa_t &= \frac{1}{1 + k_b \rho^{\text{bias}}_t}, \label{eq:bias_gate} \\
\delta_t &= \frac{1}{1 + k_n \rho^{\text{noise}}_t}, \label{eq:noise_gate}
\end{align}
where $k_b, k_n > 0$ control sensitivity to persistent drift and stochastic variability, respectively. The effective learning rate is then given by
\begin{equation}
\alpha_t^{\text{H}} = \bar{\alpha}_t \, \kappa_t \, \delta_t.
\label{eq:effective_lr}
\end{equation}
By construction, the learning rate is reduced when either sustained bias or high noise is detected, preventing aggressive updates under unreliable learning conditions. In addition to step-size modulation, HSAO introduces a conservative directional correction to mitigate repeated overshoot caused by strong alignment between gradients and accumulated momentum. The corrected gradient is defined as
\begin{equation}
\tilde{g}_t
= g_t - \tau \, s_t
\frac{\langle g_t, m_t \rangle}{\|m_t\|^2 + \varepsilon} \, m_t,
\label{eq:directional_correction}
\end{equation}
where $s_t$ denotes the alignment diagnostic, $\tau \geq 0$ controls the correction strength, and $\varepsilon$ ensures numerical stability. This term selectively attenuates update components that repeatedly reinforce overshoot-prone directions, while preserving orthogonal components that contribute to stable descent. The final HSAO parameter update is then given by
\begin{equation}
\theta_{t+1}
= \theta_t
- \alpha_t^{\text{H}}
\frac{\tilde{g}_t}{\sqrt{\hat{v}_t} + \varepsilon}.
\label{eq:hsao_update}
\end{equation}

We briefly summarize key theoretical properties of HSAO in this section. Since $\kappa_t, \delta_t \in (0,1]$ and $\bar{\alpha}_t \le \alpha_0$ by construction, the effective learning rate satisfies
\begin{equation}
0 \le \alpha_t^{\text{H}} \le \alpha_0,
\end{equation}
ensuring that diagnostic gating cannot amplify the base step size. Consequently, the update magnitude remains bounded whenever the normalized gradient is bounded. Under standard smoothness assumptions on $L(\theta)$ and bounded gradient variance, the update further satisfies a descent-style inequality
\begin{equation}
\mathbb{E}[L(\theta_{t+1})]
\le \mathbb{E}[L(\theta_t)]
- c_1 \, \mathbb{E}\!\left[\alpha_t^{\text{H}}
\left\|\frac{\tilde{g}_t}{\sqrt{\hat{v}_t}+\varepsilon}\right\|^2\right]
+ \mathcal{O}(\alpha_0^2),
\end{equation}
for some constant $c_1 > 0$, indicating that diagnostic modulation preserves descent while improving robustness to nonstationary error dynamics. Algorithm~\ref{alg:hsao} summarizes a single iteration of the proposed diagnostic-driven supervised optimizer. Before applying Adam-style moment updates \eqref{eq:adam_moment1}--\eqref{eq:adam_moment2}, HSAO updates the bias/noise EMAs \eqref{eq:bias_ema}--\eqref{eq:variance_ema} from the incremental error signal and then computes the gated step size.

\begin{algorithm}[htbp]
\caption{HSAO: Diagnostic-Driven Supervised Optimization}
\label{alg:hsao}
\begin{algorithmic}[1]
\Require Parameters $\theta_t$, gradient $g_t$, previous moments $(m_{t-1}, v_{t-1})$,
diagnostic states $(b_{t-1}, \nu_{t-1}, \sigma^2_{t-1}, s_{t-1})$
\State Update moments using \eqref{eq:adam_moment1}--\eqref{eq:adam_moment2}
\State Compute error increment $\Delta \ell_t = \ell_t - \ell_{t-1}$
\State Update bias and noise statistics using
\eqref{eq:bias_ema}--\eqref{eq:variance_ema}
\State Compute diagnostics $\rho^{\text{bias}}_t$, $\rho^{\text{noise}}_t$, $s_t$
\State Compute gated learning rate $\alpha_t^{\text{H}}$ using \eqref{eq:effective_lr}
\State Apply directional correction to obtain $\tilde{g}_t$ using \eqref{eq:directional_correction}
\State Update parameters using \eqref{eq:hsao_update}
\State \Return $\theta_{t+1}$
\end{algorithmic}
\end{algorithm}

% =========================================================
% =========================================================
\section{Diagnostic-Driven Reinforcement Learning}
\label{sec:hedrl}

We next extend the proposed diagnostic framework to RL and term the resulting method Hybrid Error-Diagnostic Reinforcement Learning (HED-RL). In RL, learning dynamics are governed by TD errors rather than supervised losses, introducing additional challenges due to bootstrapping, delayed rewards, and nonstationary data distributions induced by policy updates. We show that bias-noise-alignment diagnostics naturally generalize to TD-error signals and can be used to stabilize actor-critic learning through adaptive gating of critic updates, policy updates, and exploration. Consider a standard actor-critic setting with policy $\pi_\theta(a \mid s)$ and value function $V_\psi(s)$. The one-step TD error at time $t$ is
\begin{equation}
\delta_t = r_t + \gamma V_\psi(s_{t+1}) - V_\psi(s_t),
\label{eq:td_error}
\end{equation}
where $r_t$ is the reward and $\gamma \in (0,1)$ is the discount factor. The TD error serves as the primary learning signal for both the critic and the policy via advantage estimates. Analogous to the supervised case, we focus on the temporal structure of TD errors and track EMAs capturing bias and variability:
\begin{align}
b^{\text{TD}}_t &= (1 - \alpha) b^{\text{TD}}_{t-1} + \alpha \, \delta_t,
\label{eq:td_bias_ema} \\
\nu^{\text{TD}}_t &= (1 - \beta) \nu^{\text{TD}}_{t-1} + \beta \, |\delta_t|,
\label{eq:td_volatility_ema} \\
\sigma^{2,\text{TD}}_t
&= (1 - \zeta) \sigma^{2,\text{TD}}_{t-1}
+ \zeta \, (\delta_t - b^{\text{TD}}_t)^2,
\label{eq:td_variance_ema}
\end{align}
where $\alpha, \beta, \zeta \in (0,1)$ control smoothing. Using these statistics, we define normalized TD-error diagnostics:
\begin{align}
\rho^{\text{bias}}_t &=
\frac{|b^{\text{TD}}_t|}{\varepsilon + \nu^{\text{TD}}_t},
\label{eq:td_bias_ratio} \\
\rho^{\text{noise}}_t &=
\frac{\sqrt{\sigma^{2,\text{TD}}_t}}{\varepsilon + |b^{\text{TD}}_t|},
\label{eq:td_noise_ratio}
\end{align}
which distinguish between regimes dominated by systematic TD bias such as the model mismatch or policy-induced drift and regimes dominated by stochastic variability such as the exploration noise or partial observability. We incorporate the TD-error diagnostics into actor-critic learning by modulating effective step sizes for the critic and the policy. The value function parameters $\psi$ are updated using a gated TD step:
\begin{equation}
\psi_{t+1}
= \psi_t
+ \alpha_V
\frac{1}{1 + k_n \rho^{\text{noise}}_t}
\, \delta_t \, \nabla_\psi V_\psi(s_t),
\label{eq:critic_update}
\end{equation}
where $\alpha_V$ is the base critic learning rate and $k_n > 0$ controls sensitivity to TD noise. When stochastic variability dominates, the effective critic step size is reduced, mitigating instability caused by noisy bootstrap targets. The policy parameters $\theta$ are updated using a gated policy-gradient step:
\begin{equation}
\theta_{t+1}
= \theta_t
+ \alpha_\pi
\frac{1}{1 + k_b \rho^{\text{bias}}_t}
\, \hat{A}_t \, \nabla_\theta \log \pi_\theta(a_t \mid s_t),
\label{eq:policy_update}
\end{equation}
where $\alpha_\pi$ is the base policy learning rate and $\hat{A}_t$ is an advantage estimate. Persistent TD bias leads to smaller effective policy steps, aligning with the intuition behind trust-region methods and clipped policy updates.

Exploration is commonly encouraged through entropy regularization with a fixed or heuristically scheduled coefficient. In contrast, HED-RL adapts exploration based on TD-error diagnostics by defining an adaptive entropy coefficient
\begin{equation}
\beta_H(t)
= \beta_0
\frac{1 + \lambda_b \rho^{\text{bias}}_t}
{1 + \lambda_n \rho^{\text{noise}}_t},
\label{eq:entropy_schedule}
\end{equation}
where $\beta_0$ is a base entropy weight and $\lambda_b, \lambda_n \ge 0$ control sensitivity to bias and noise. When TD noise is high, $\beta_H(t)$ decreases, reducing exploratory behavior driven by unreliable feedback. When systematic bias dominates, exploration is encouraged to escape suboptimal or drifting policies. The entropy-regularized policy objective becomes
\begin{equation}
\mathcal{L}_\pi
= - \mathbb{E}\!\left[ \hat{A}_t \log \pi_\theta(a_t \mid s_t) \right]
- \beta_H(t) \, \mathbb{E}\!\left[ \mathcal{H}(\pi_\theta(\cdot \mid s_t)) \right],
\end{equation}
where $\mathcal{H}(\cdot)$ denotes policy entropy.

Since the policy gate satisfies $0 < (1 + k_b \rho^{\text{bias}}_t)^{-1} \le 1$, the effective policy step size is upper bounded by $\alpha_\pi$. Assuming bounded advantages and gradients, the policy update magnitude remains bounded at each iteration. The critic gate similarly attenuates updates when TD-error variance increases, reducing the impact of noisy bootstrap targets. Under standard assumptions of bounded rewards and contraction of the Bellman operator in expectation, the gated critic update preserves stability and improves robustness to variance spikes. Together, bias-based policy gating, noise-based critic gating, and diagnostic-driven entropy scheduling provide a coherent mechanism for stabilizing actor-critic learning; rather than relying on fixed heuristics, HED-RL adapts learning behavior based on interpretable TD-error structure. Algorithm~\ref{alg:hedrl} summarizes the HED-RL procedure for actor-critic learning.

\begin{algorithm}[htbp]
\caption{HED-RL: Diagnostic-Driven actor-critic Learning}
\label{alg:hedrl}
\begin{algorithmic}[1]
\Require Policy $\pi_\theta$, value function $V_\psi$, TD diagnostics
$(b^{\text{TD}}, \nu^{\text{TD}}, \sigma^{2,\text{TD}})$
\For{each transition $(s_t,a_t,r_t,s_{t+1})$}
  \State Compute TD error $\delta_t$ using \eqref{eq:td_error}
  \State Update TD diagnostics using
  \eqref{eq:td_bias_ema}--\eqref{eq:td_variance_ema}
  \State Compute $\rho^{\text{bias}}_t$, $\rho^{\text{noise}}_t$
  \State Update critic using \eqref{eq:critic_update}
  \State Compute adaptive entropy $\beta_H(t)$ using \eqref{eq:entropy_schedule}
  \State Update policy using \eqref{eq:policy_update} and entropy regularization
\EndFor
\end{algorithmic}
\end{algorithm}

% =========================================================
\section{Diagnostic-Conditioned Meta-Learning}
\label{sec:mllp}

We finally extend the diagnostic-driven learning framework to meta-learning in the form of a Meta-Learned Learning Policy (MLLP), where the objective is to enable rapid and stable adaptation across a distribution of tasks. While meta-learning and learned optimizers have demonstrated impressive data efficiency, most existing approaches condition updates primarily on raw gradient information or opaque internal states. In contrast, we show that incorporating bias-noise-alignment diagnostics as explicit conditioning signals yields learned update rules that are both interpretable and robust under task heterogeneity.

Let $\mathcal{T} \sim p(\mathcal{T})$ denote a distribution over learning tasks, where each task $\mathcal{T}_i$ is associated with an objective
\begin{equation}
L_{\mathcal{T}_i}(\theta) =
\mathbb{E}_{(x,y)\sim\mathcal{D}_{\mathcal{T}_i}}
\left[\ell(f_\theta(x), y)\right],
\end{equation}
and the task-specific data distribution $\mathcal{D}_{\mathcal{T}_i}$ may vary significantly across tasks. In a standard meta-learning formulation, a learner adapts task-specific parameters $\theta$ over a small number of inner-loop steps, while a meta-learner optimizes shared meta-parameters to improve post-adaptation performance. We focus on the setting of learned optimizers, where the update rule itself is parameterized and trained across tasks. 

Let $g_t = \nabla_\theta L_{\mathcal{T}}(\theta_t)$ denote the task-specific gradient at inner-loop step $t$, and let $m_t, v_t$ denote first- and second-order moment estimates. In addition to these quantities, we compute diagnostic signals $(b_t, \nu_t, s_t)$ from the error evolution, as defined in Section~\ref{sec:diagnostics}. The learned optimizer $f_\Phi$, parameterized by $\Phi$, maps this diagnostic-augmented state to a parameter update:
\begin{equation}
\theta_{t+1}
= \theta_t + f_\Phi\!\left(
g_t, m_t, v_t, b_t, \nu_t, s_t
\right).
\label{eq:mllp_update}
\end{equation}

In practice, $f_\Phi$ may be implemented as a coordinate-wise multilayer perceptron or a recurrent network that outputs both a step magnitude and directional gates. A representative parameterization is given by
\begin{align}
(\omega_t, \zeta_t, \alpha^{\text{meta}}_t)
&= \Phi\!\left(\text{concat}(g_t, v_t, b_t, \nu_t, s_t)\right), \\
\Delta \theta_t
&= - \alpha^{\text{meta}}_t
\left[
\omega_t \odot \frac{g_t}{\sqrt{v_t} + \varepsilon}
+ \zeta_t \odot \frac{\tilde{g}_t}{\sqrt{v_t} + \varepsilon}
\right], \\
\theta_{t+1} &= \theta_t + \Delta \theta_t,
\label{eq:mllp_update_final}
\end{align}
where $\alpha^{\text{meta}}_t > 0$ is enforced via a positive activation (e.g., softplus), and $\tilde{g}_t$ optionally incorporates the alignment-based correction defined in Section~\ref{sec:hsao}. Conditioning on $(b_t, \nu_t, s_t)$ enables the learned optimizer to adapt its behavior based on task difficulty, gradient reliability, and curvature-induced instability.

For each task $\mathcal{T}_i$, the learned optimizer is applied for $K$ inner-loop steps, yielding adapted parameters $\theta^{(i)}_K$. The meta-objective is to minimize the post-adaptation loss across tasks:
\begin{equation}
\min_{\Phi}
\mathbb{E}_{\mathcal{T}_i \sim p(\mathcal{T})}
\left[
L_{\mathcal{T}_i}\!\left(\theta^{(i)}_K(\Phi)\right)
\right].
\label{eq:meta_objective}
\end{equation}
Gradients of \eqref{eq:meta_objective} are computed via backpropagation through the unrolled inner-loop updates or truncated approximations thereof. We summarize key properties of the diagnostic-conditioned meta-learning procedure in this section. Assuming that the outputs of $f_\Phi$ are bounded such that $0 < \alpha^{\text{meta}}_t \le \alpha_{\max}$ and $\|\omega_t\|_\infty, \|\zeta_t\|_\infty \le 1$, the induced inner-loop updates satisfy
\begin{equation}
\|\theta_{t+1} - \theta_t\|
\le \alpha_{\max}
\left\|
\frac{\tilde{g}_t}{\sqrt{v_t} + \varepsilon}
\right\|,
\end{equation}
ensuring stability of task adaptation. Under standard smoothness assumptions on $L_{\mathcal{T}}(\theta)$, the expected task loss decreases monotonically over inner-loop steps up to higher-order terms, provided that the learned step sizes remain bounded. Conditioning on diagnostic signals does not alter descent behavior, but improves robustness by attenuating updates under noisy or biased gradients. Finally, unlike conventional learned optimizers that rely on opaque hidden states, MLLP explicitly conditions on interpretable diagnostics. Large bias promotes corrective or exploratory updates, while high noise induces conservative adaptation. This structure provides insight into why the learned optimizer behaves differently across tasks with varying difficulty and signal reliability. Algorithm~\ref{alg:mllp} summarizes the meta-training procedure for the proposed diagnostic-conditioned learned optimizer.

\begin{algorithm}[htbp]
\caption{MLLP: Diagnostic-Conditioned Meta-Learned Optimizer}
\label{alg:mllp}
\begin{algorithmic}[1]
\Require Task distribution $p(\mathcal{T})$, learned optimizer $f_\Phi$
\While{not converged}
  \State Sample batch of tasks $\{\mathcal{T}_i\}$
  \For{each task $\mathcal{T}_i$}
    \State Initialize $\theta^{(i)}_0$ and diagnostic states
    \For{$t = 0$ to $K-1$}
      \State Compute task gradient $g_t$
      \State Update diagnostics $(b_t, \nu_t, s_t)$
      \State Compute update $\Delta \theta_t = f_\Phi(g_t, m_t, v_t, b_t, \nu_t, s_t)$
      \State $\theta^{(i)}_{t+1} \leftarrow \theta^{(i)}_t + \Delta \theta_t$
    \EndFor
  \EndFor
  \State Update meta-parameters $\Phi$ via meta-gradient descent
\EndWhile
\end{algorithmic}
\end{algorithm}

% =========================================================
% =========================================================
% =========================================================
\section{Empirical Validation}
\label{sec:experiments}

This section provides illustrative empirical evidence to validate the behavior and interpretability of the proposed diagnostic-driven learning framework. The goal of these experiments is not to benchmark task-level performance or establish state-of-the-art results, but rather to demonstrate that (i) the proposed bias-noise-alignment diagnostics evolve in accordance with their theoretical interpretation, (ii) diagnostic-driven adaptation modulates learning dynamics in a stable and principled manner under nonstationarity, and (iii) the same diagnostic backbone generalizes consistently across supervised learning, RL, and meta-learning paradigms.

Across all experiments, we focus on learning-dynamics indicators such as loss evolution, TD error statistics, update magnitudes, and diagnostic traces, rather than final task accuracy or control performance. All plots report representative runs and are intended to illustrate qualitative trends.

\subsection{Supervised Learning Diagnostics}

We discuss the behavior of HSAO in nonstationary supervised learning settings. The objective is to examine how bias, noise, and alignment diagnostics evolve during training and how they influence learning-rate modulation and update behavior. In such settings, periods of systematic drift in the error signal are reflected in sustained bias increases, which trigger attenuation of the effective learning rate. Stochastic perturbations manifest as spikes in the noise diagnostic, leading to more conservative updates. Alignment diagnostics rise during repeated directional excitation, correlating with overshoot-prone update behavior. These trends are consistent with the intended interpretation of the diagnostics and demonstrate how HSAO adapts learning dynamics based on error structure rather than instantaneous gradient information. Comprehensive empirical validation for supervised battery temperature estimation tasks is deferred to a companion experimental study.

\subsection{Reinforcement Learning Diagnostics}

We next illustrate the diagnostic behavior of the proposed HED-RL framework in actor-critic training. Rather than focusing on episodic returns or task success rates, we analyze the temporal evolution of TD error diagnostics and their influence on critic updates, policy updates, and entropy regulation.

Figure~\ref{fig:B1_entropy_coef} shows the evolution of the entropy coefficient during training (representative run). While baseline PPO employs a fixed entropy regularization parameter, HED-RL adaptively modulates the entropy coefficient based on TD-error bias and noise diagnostics. Elevated TD-error noise leads to reduced entropy-driven exploration, whereas persistent TD bias triggers more conservative policy updates, reflecting reduced confidence in the learning signal. Figure~\ref{fig:B1_gate_step} illustrates the effective policy update gate induced by diagnostic-driven modulation. Unlike baseline PPO, which applies a constant update scale, HED-RL automatically attenuates the effective policy update magnitude when TD-error noise increases. This behavior confirms that diagnostic-driven gating responds directly to the reliability of the TD-error signal, providing a principled alternative to fixed or heuristically scheduled stabilization mechanisms.

\begin{figure}[t]
  \centering
  \includegraphics[width=\linewidth]{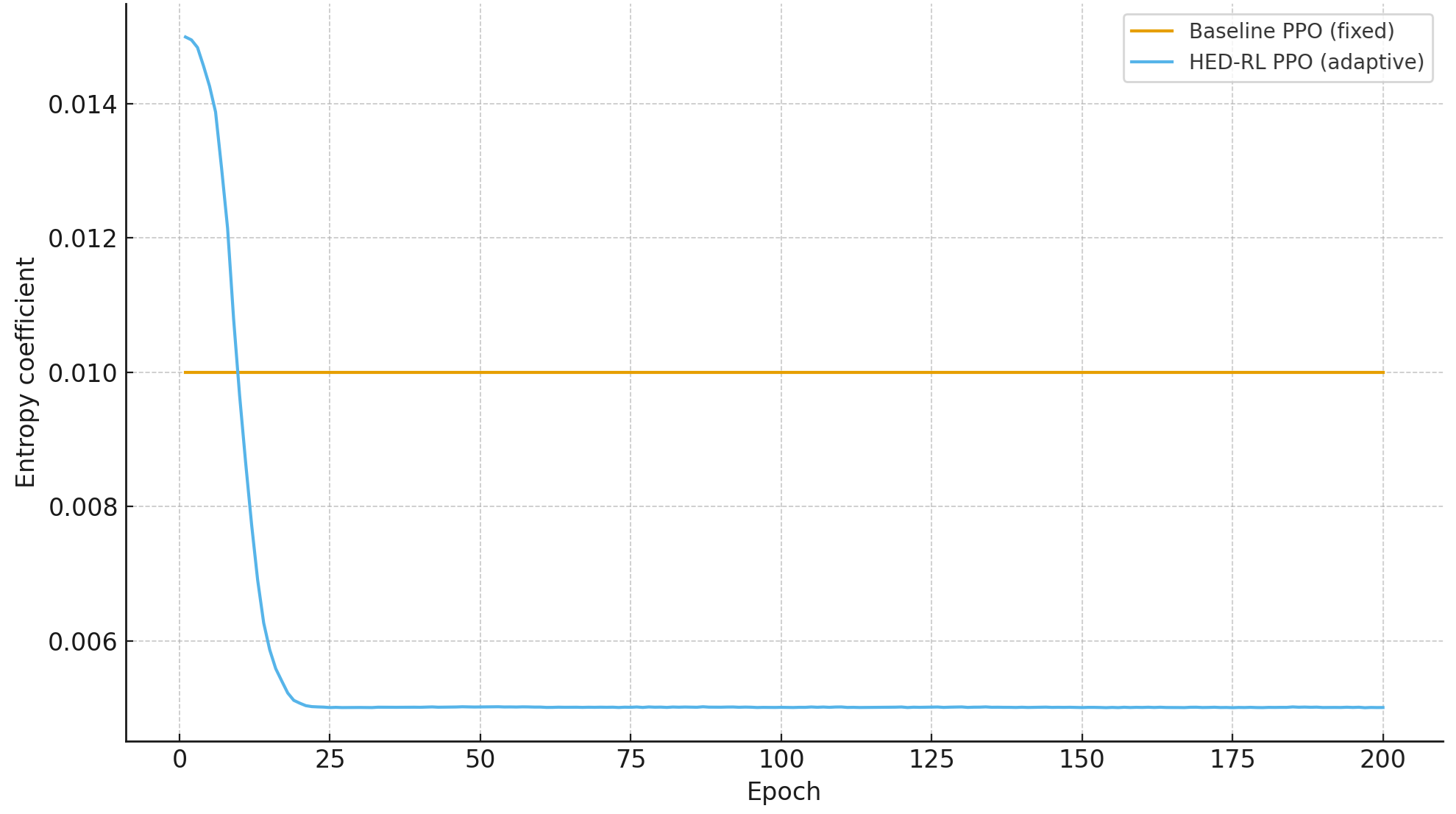}
  \caption{Entropy coefficient during training. Baseline PPO uses a fixed coefficient, while HED\mbox{-}RL adapts it based on TD-error bias/noise diagnostics (representative run).}
  \label{fig:B1_entropy_coef}
\end{figure}

\begin{figure}[t]
  \centering
  \includegraphics[width=\linewidth]{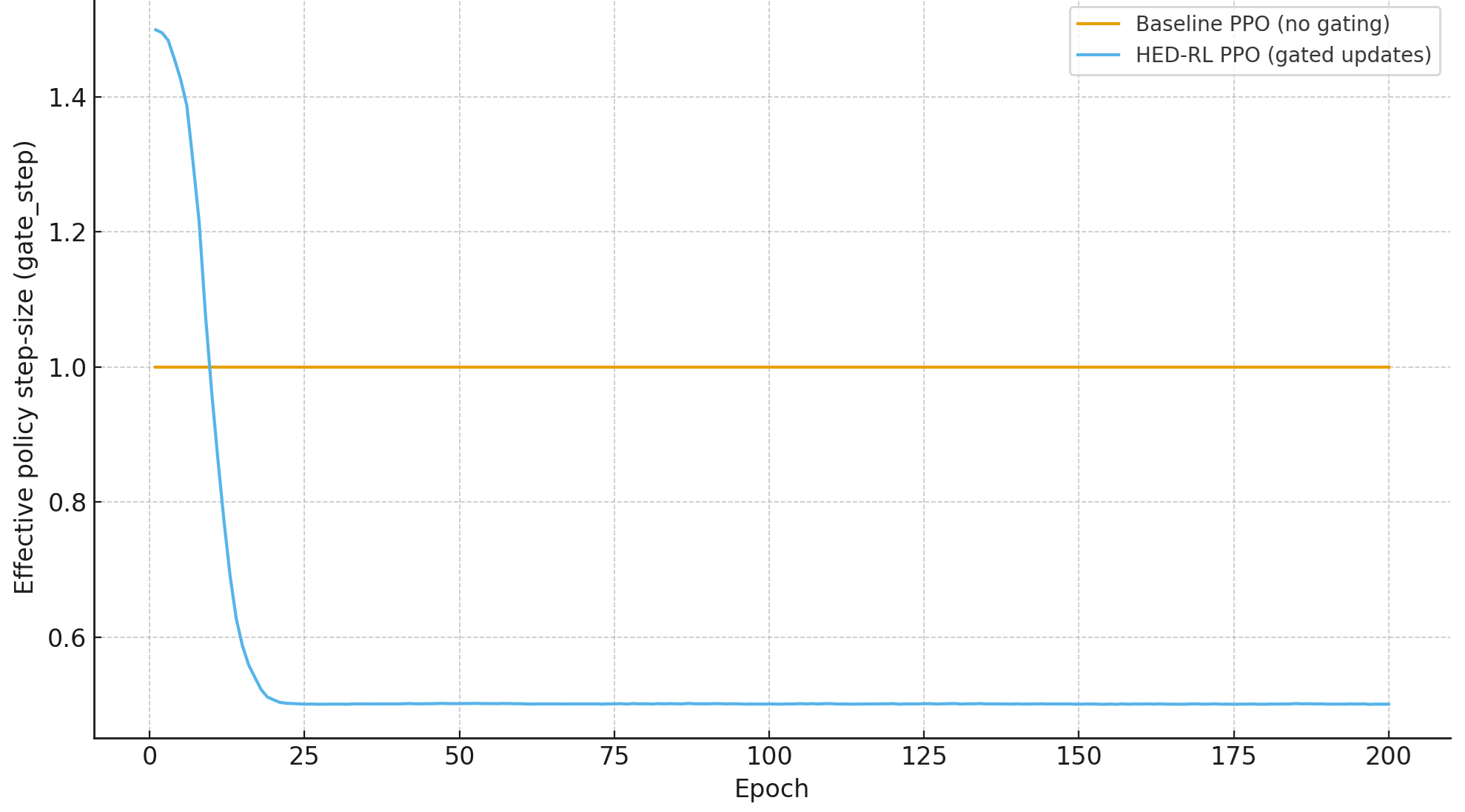}
  \caption{Effective policy update gate during training. Baseline PPO uses a constant update scale, whereas HED\mbox{-}RL automatically shrinks the effective update magnitude when TD-error noise increases.}
  \label{fig:B1_gate_step}
\end{figure}

\subsection{Meta-Learning Diagnostics}

Finally, we discuss the role of diagnostic conditioning in the MLLP. In few-shot adaptation settings with heterogeneous task distributions, diagnostic signals provide explicit indicators of error reliability and curvature, enabling the learned optimizer to regulate update magnitudes across tasks. By conditioning inner-loop updates on bias and noise diagnostics rather than solely on gradient information, MLLP promotes smoother and more stable adaptation behavior under task-level uncertainty. Comprehensive empirical validation of diagnostic-conditioned meta-learning in battery-centric applications is deferred to a companion experimental study.

\subsection{Diagnostic Ablation Analysis}

To further isolate the role of individual diagnostics, targeted ablation experiments remove bias, noise, or alignment components from the adaptation mechanism. Qualitative analysis of the resulting learning dynamics shows that removing the bias diagnostic leads to increased overshoot under systematic drift, removing the noise diagnostic results in unstable updates under stochastic perturbations, and removing the alignment diagnostic induces oscillatory behavior in curved or delayed-response landscapes. These observations confirm that the three diagnostics provide complementary information and that no single component alone is sufficient to ensure stable adaptation.

These illustrative results support the central premise of this work: bias-noise-alignment diagnostics provide interpretable, early indicators of learning instability and enable principled modulation of learning dynamics across diverse learning paradigms.

% =========================================================
\section{Unified Perspective, Implications, and Limitations}
\label{sec:discussion}

This work proposes a diagnostic-driven perspective on adaptive learning that treats error evolution as a first-class object rather than an incidental byproduct of gradient-based optimization. The experiments are intentionally designed to validate diagnostic behavior and stability mechanisms, rather than to maximize task-specific performance. By decomposing error dynamics into bias, noise, and alignment components, we introduce a unifying framework that spans supervised learning, RL, and meta-learning, while remaining low-overhead, interpretable, and task-agnostic.

In particular, the diagnostics are derived from finite-window EMAs and can lag abrupt regime changes. Across learning paradigms, adaptation decisions are traditionally driven by instantaneous gradient information or heuristic schedules. In contrast, the proposed framework leverages the temporal structure of error signals to regulate learning behavior. Bias captures persistent drift indicating systematic mismatch between model updates and the evolving objective \cite{hu2021bias}. Noise captures stochastic variability that undermines the reliability of feedback \cite{wojtowytsch2023stochastic}. Alignment captures directional repetition that can lead to oscillation or resonance. These diagnostics provide complementary information that is largely orthogonal to gradient magnitude or curvature, and they are computed from simple statistics of scalar error signals without requiring higher-order derivatives, model internals, or task-specific structure. As a result, the same diagnostic backbone naturally applies to supervised losses, TD errors in RL, and task-level losses in meta-learning.

From a conceptual standpoint, the proposed framework establishes connections between adaptive learning and classical control theory. Bias and noise diagnostics mirror the separation between deterministic drift and stochastic disturbance commonly used in control and signal processing, while alignment reflects resonance phenomena associated with repeated excitation in dynamical systems. From this perspective, diagnostic-driven learning can be interpreted as a form of gain scheduling, in which learning rates, update directions, and exploration parameters are modulated based on observed system behavior rather than fixed heuristics. Unlike sharpness-aware or trust-region methods that impose constraints directly on the parameter space, the diagnostic-driven approach operates at the level of learning dynamics and therefore complements existing techniques rather than replacing them.

A key advantage of the proposed diagnostics is interpretability. Because bias, noise, and alignment correspond to intuitive learning phenomena, their evolution over time provides insight into why adaptation accelerates, stalls, or becomes unstable. This transparency contrasts with many learned optimization approaches in which adaptation is governed by opaque internal states. Interpretability is particularly valuable in settings where reliability and debuggability are critical. Diagnostic traces can serve as early warning signals for impending instability and enable principled intervention, such as adjusting training budgets, resetting optimizers, or modifying exploration strategies. While this paper focuses on algorithmic formulation, the diagnostic framework naturally supports monitoring and control of learning systems in deployment.

The diagnostic-driven framework is intentionally minimalist. It does not assume specific model architectures, loss functions, or data modalities, and is best viewed as a control layer that augments existing learning algorithms rather than a standalone replacement. This design choice prioritizes generality and ease of integration over task-specific optimization. At the same time, the framework does not claim to resolve all challenges associated with nonstationary learning. Diagnostics are computed from finite-length temporal statistics and may exhibit lag when learning dynamics change rapidly. Sudden regime shifts or adversarial perturbations may therefore lead to delayed adaptation.

Several additional limitations and open challenges merit discussion. Although the framework reduces reliance on manual learning-rate scheduling, it introduces sensitivity parameters controlling smoothing and gating strength. While these parameters tend to be more stable across tasks than raw learning rates, systematic methods for their automatic calibration remain an open problem. In highly adversarial or chaotic environments, error signals may become uninformative or misleading, limiting the effectiveness of diagnostic-driven control. Moreover, while the diagnostics themselves are computationally lightweight, their interaction with large-scale distributed training and highly overparameterized models has not been fully explored. Understanding how diagnostic signals aggregate across layers, modules, or distributed workers is an important direction for future research. Finally, the theoretical analysis in this work relies on standard smoothness and bounded-variance assumptions; extending guarantees to non-smooth objectives, delayed feedback, or partially observed error signals remains an open challenge.

Despite these limitations, the proposed framework suggests a broader design principle for future learning systems: reliable adaptation should be driven not only by gradients, but also by the structure and reliability of error signals. This perspective opens avenues for integrating diagnostic-driven adaptation with probabilistic learning, uncertainty estimation, and system-level monitoring, and provides a foundation for application-specific studies in safety-critical and nonstationary domains \cite{karnehm2025core, 11317983}.

\section{Conclusion}
This paper introduced a diagnostic-driven framework for adaptive learning based on a principled decomposition of error dynamics into bias, noise, and alignment components. By explicitly modeling the temporal evolution of error signals, the proposed framework provides a lightweight and interpretable control layer that augments gradient-based learning without altering model architectures or objective functions. We demonstrated that the same diagnostic backbone naturally extends across supervised optimization, actor-critic RL, and meta-learning, yielding diagnostic-driven instantiations with bounded updates and improved stability under standard smoothness assumptions. Unlike conventional adaptive methods that rely primarily on instantaneous gradient statistics, the proposed approach regulates learning behavior based on the reliability and structure of the error signal itself. More broadly, this work advocates a shift in perspective: reliable adaptation should be guided not only by gradients, but also by interpretable diagnostics of error evolution. By elevating error dynamics to a first-class object, the framework bridges ideas from optimization, control, and meta-learning, and establishes a foundation for the principled design of adaptive learning systems in nonstationary environments. We hope this perspective will stimulate further theoretical and empirical research on diagnostic-driven learning and its integration with uncertainty-aware and control-oriented learning frameworks.

\section*{Acknowledgments}
The authors gratefully acknowledge Trihaan Samanta, whose everyday observations of human learning behavior helped inspire the intuition underlying this work, that systematic bias and stochastic noise can be disentangled and adaptively regulated during learning through experience.

\section*{Funding}
This research received no external funding.

% =========================================================
\bibliographystyle{IEEEtran}
\bibliography{references}

% =========================================================
\appendix
\renewcommand{\thesection}{Appendix \Alph{section}}
\section{Proof Sketches and Technical Lemmas}

The following proof sketches are intended to establish stability and boundedness properties of the proposed diagnostic-driven updates. Formal convergence rates and asymptotic optimality guarantees are outside the scope of this work.
This appendix provides proof sketches for the theoretical properties stated in
Sections~\ref{sec:hsao}, \ref{sec:hedrl}, and \ref{sec:mllp}. The proofs are intended to establish boundedness and stability under standard assumptions, rather than asymptotic convergence guarantees.

\subsection{Assumptions}

We adopt the following standard assumptions commonly used in stochastic
optimization and RL:
\begin{itemize}[leftmargin=*,noitemsep]
\item The objective function $L(\theta)$ is $L$-smooth.
\item Gradients are unbiased with bounded second moments.
\item Rewards and TD errors are bounded.
\item Diagnostic smoothing parameters lie in $(0,1)$.
\end{itemize}

These assumptions are sufficient to analyze the stability of the proposed diagnostic-driven updates.

\subsection{Bounded Effective Step Size in HSAO}

\begin{lemma}
Let $\alpha_t^{\mathrm{H}}$ denote the effective learning rate defined in \eqref{eq:effective_lr}. Then for all $t$,
\[
0 \le \alpha_t^{\mathrm{H}} \le \alpha_0 .
\]
\end{lemma}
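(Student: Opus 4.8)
The plan is to unpack the definition of $\alpha_t^{\mathrm{H}}$ in \eqref{eq:effective_lr} as the product of three factors, $\bar{\alpha}_t$, $\kappa_t$, and $\delta_t$, and to bound each factor separately. First I would observe that the base schedule $\bar{\alpha}_t = \alpha_0/(1 + c\log(1+t))$ from \eqref{eq:base_lr} satisfies $0 < \bar{\alpha}_t \le \alpha_0$ for all $t \ge 0$: since $t \ge 0$ we have $\log(1+t) \ge 0$, and with $c > 0$ the denominator $1 + c\log(1+t) \ge 1$, so dividing $\alpha_0 > 0$ by something at least $1$ keeps the value in $(0,\alpha_0]$, with equality exactly at $t = 0$.

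Next I would bound the gating factors. From \eqref{eq:bias_gate} and \eqref{eq:noise_gate}, $\kappa_t = (1 + k_b\rho^{\text{bias}}_t)^{-1}$ and $\delta_t = (1 + k_n\rho^{\text{noise}}_t)^{-1}$. The ratios $\rho^{\text{bias}}_t$ and $\rho^{\text{noise}}_t$ are defined in \eqref{eq:bias_ratio} and \eqref{eq:noise_ratio} as quotients of nonnegative quantities — absolute values and square roots of EMAs of squared residuals — divided by strictly positive denominators (thanks to the $\varepsilon > 0$ regularizer), hence $\rho^{\text{bias}}_t \ge 0$ and $\rho^{\text{noise}}_t \ge 0$. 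With $k_b, k_n > 0$, the denominators $1 + k_b\rho^{\text{bias}}_t$ and $1 + k_n\rho^{\text{noise}}_t$ are each at least $1$, so $\kappa_t, \delta_t \in (0,1]$.

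Finally I would multiply the three bounds: $\alpha_t^{\mathrm{H}} = \bar{\alpha}_t\,\kappa_t\,\delta_t$ is a product of nonnegative numbers, hence nonnegative, and it is bounded above by $\alpha_0 \cdot 1 \cdot 1 = \alpha_0$. This gives $0 \le \alpha_t^{\mathrm{H}} \le \alpha_0$ as claimed. (One could note the lower bound is in fact strict, $\alpha_t^{\mathrm{H}} > 0$, provided $\alpha_0 > 0$, but the stated inequality only requires nonnegativity.)

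There is no genuine obstacle here — the result is essentially a bookkeeping exercise — but the one point deserving a sentence of care is confirming that the diagnostic ratios are well-defined and nonnegative for all $t$, which hinges on the $\varepsilon > 0$ term keeping every denominator strictly positive and on the EMAs \eqref{eq:bias_ema}, \eqref{eq:volatility_ema}, \eqref{eq:variance_ema} being initialized and updated so that $\nu_t \ge 0$ and $\sigma_t^2 \ge 0$ (immediate by induction, since $|\Delta\ell_t| \ge 0$ and $(\Delta\ell_t - b_t)^2 \ge 0$ and the EMA weights lie in $(0,1)$). Once that is in place, the chain of inequalities is immediate.
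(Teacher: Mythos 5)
Your proof is correct and follows the same route as the paper's: bound $\bar{\alpha}_t$ by $\alpha_0$ via the base schedule, show the two gates lie in $(0,1]$ because the diagnostic ratios are nonnegative, and multiply. The extra care you take in verifying nonnegativity of the EMAs and strict positivity of the $\varepsilon$-regularized denominators is a welcome (if unnecessary for the stated bound) addition.
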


\begin{proof}
By construction, the base learning rate $\bar{\alpha}_t$ satisfies
$\bar{\alpha}_t \le \alpha_0$. The diagnostic gates $\kappa_t$ and $\delta_t$ are of the form $(1 + c \rho)^{-1}$ with $c > 0$, implying $\kappa_t, \delta_t \in
(0,1]$. Therefore, $\alpha_t^{\mathrm{H}} = \bar{\alpha}_t \kappa_t \delta_t$ is nonnegative and upper bounded by $\alpha_0$.
\end{proof}

\subsection{Stability of HSAO Updates}

\begin{proposition}
Assume normalized gradients are bounded. Then the HSAO update
\eqref{eq:hsao_update} produces bounded parameter updates.
\end{proposition}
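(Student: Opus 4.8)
The plan is to show that the HSAO parameter update $\theta_{t+1} - \theta_t = -\alpha_t^{\mathrm{H}} \tilde{g}_t / (\sqrt{\hat{v}_t} + \varepsilon)$ has bounded norm by bounding each of its three constituent factors: the effective step size $\alpha_t^{\mathrm{H}}$, the corrected gradient $\tilde{g}_t$, and the adaptive normalization $1/(\sqrt{\hat{v}_t}+\varepsilon)$. First I would invoke the preceding lemma to conclude $0 \le \alpha_t^{\mathrm{H}} \le \alpha_0$, so the step-size factor is controlled by the constant $\alpha_0$. Next I would bound the directional correction: from \eqref{eq:directional_correction}, $\tilde{g}_t = g_t - \tau s_t \frac{\langle g_t, m_t\rangle}{\|m_t\|^2 + \varepsilon} m_t$, and since $|s_t| \le 1$ and, by Cauchy--Schwarz, $\left|\frac{\langle g_t, m_t\rangle}{\|m_t\|^2+\varepsilon}\right| \|m_t\| \le \frac{\|g_t\|\,\|m_t\|}{\|m_t\|^2+\varepsilon}\|m_t\| \le \|g_t\|$ (using $\|m_t\|^2 \le \|m_t\|^2 + \varepsilon$), the triangle inequality gives $\|\tilde{g}_t\| \le (1+\tau)\|g_t\|$. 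Thus $\tilde g_t$ is a bounded multiple of $g_t$.

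The key remaining step is to handle the normalized quantity $\tilde{g}_t/(\sqrt{\hat{v}_t}+\varepsilon)$ componentwise. The hypothesis of the proposition is precisely that the \emph{normalized} gradient is bounded; in the Adam-style construction $\hat{v}_t$ is a bias-corrected EMA of $g_t \odot g_t$, so $\sqrt{\hat{v}_{t,i}}$ tracks the magnitude of $g_{t,i}$, and the $\varepsilon$ floor guarantees the denominator is bounded below. I would argue that under this assumption there exists a constant $G$ with $\|g_t/(\sqrt{\hat v_t}+\varepsilon)\| \le G$, and then show the directional correction term, when divided by the same denominator, is also bounded --- either by noting it is itself a scalar multiple (bounded by $\tau$ in operator terms via the rank-one projection) of a normalized-gradient-like object, or more simply by bounding $\|\tilde g_t/(\sqrt{\hat v_t}+\varepsilon)\| \le (1+\tau)\,\|g_t/(\sqrt{\hat v_t}+\varepsilon)\|$ using that the correction direction is a scaled copy of $m_t$ whose normalized norm is likewise controlled. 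Combining, $\|\theta_{t+1}-\theta_t\| \le \alpha_0 (1+\tau) G$, a fixed constant independent of $t$.

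The main obstacle is making the coupling between $\tilde g_t$ and the per-coordinate normalizer $\sqrt{\hat v_t}+\varepsilon$ rigorous: the correction subtracts a multiple of $m_t$ rather than of $g_t$, so dividing by $\sqrt{\hat v_t}$ (built from $g_t$, not $m_t$) does not factor cleanly, and one must either assume $m_t$ is likewise bounded in the normalized metric (reasonable, since $m_t$ is an EMA of past $g_t$'s and the second moments are bounded) or absorb the mismatch into the constant via the $\varepsilon$ floor in the denominator of \eqref{eq:directional_correction}. Since this is a proof sketch establishing boundedness rather than a sharp constant, I would state the normalized-gradient boundedness hypothesis as covering $m_t$ as well (consistent with the bounded-second-moment assumption in the appendix) and present the chain of inequalities above, yielding the stated conclusion that HSAO produces uniformly bounded parameter updates.

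\begin{proof}[Proof sketch]
Write the update as $\theta_{t+1} - \theta_t = -\alpha_t^{\mathrm{H}}\, u_t$ where $u_t = \tilde{g}_t/(\sqrt{\hat{v}_t}+\varepsilon)$. By the previous lemma, $0 \le \alpha_t^{\mathrm{H}} \le \alpha_0$. For the direction, \eqref{eq:directional_correction} gives
\[
\tilde{g}_t = g_t - \tau\, s_t\, \frac{\langle g_t, m_t\rangle}{\|m_t\|^2+\varepsilon}\, m_t ,
\]
and since $|s_t| \le 1$ and $\bigl|\langle g_t, m_t\rangle\bigr| \le \|g_t\|\,\|m_t\| \le \|g_t\|\,\sqrt{\|m_t\|^2+\varepsilon}$, the subtracted term has norm at most $\tau \|g_t\|$. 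Dividing coordinatewise by $\sqrt{\hat{v}_t}+\varepsilon$ and using the assumption that the normalized gradient (and, consistent with the bounded-second-moment assumption, the normalized momentum) is bounded by some constant $G$, we obtain $\|u_t\| \le (1+\tau)\,G$. Hence
\[
\|\theta_{t+1} - \theta_t\| = \alpha_t^{\mathrm{H}}\,\|u_t\| \le \alpha_0 (1+\tau)\, G ,
\]
which is finite and independent of $t$, establishing boundedness of the HSAO updates.
\end{proof}
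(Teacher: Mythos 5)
Your proof follows the same route as the paper's: bound the effective step size by $\alpha_0$ via the preceding lemma, bound the normalized direction using the boundedness hypothesis, and control the alignment correction term. If anything, your version is more careful than the published sketch, which asserts that the correction ``does not increase the update norm'' (false in general, e.g.\ when $s_t<0$); your explicit $(1+\tau)\|g_t\|$ bound via Cauchy--Schwarz, and your acknowledgment that the correction is along $m_t$ rather than $g_t$ so the hypothesis must also cover the normalized momentum, patch genuine looseness in the paper's argument.
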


\begin{proof}
From Lemma~1 and bounded normalized gradients, the magnitude of the update is upper bounded by $\alpha_0$ times a finite constant. The alignment-based correction subtracts a scaled projection along the momentum direction and does not increase the update norm. Hence, the overall update remains bounded.
\end{proof}

\subsection{Bounded Policy Updates in HED-RL}

\begin{lemma}
The diagnostic-gated policy update \eqref{eq:policy_update} has bounded step size under bounded advantages and gradients.
\end{lemma}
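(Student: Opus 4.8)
The plan is to bound the norm of the right-hand side of the update \eqref{eq:policy_update} directly, treating each factor separately. Write the effective step as $\alpha_{\pi} / (1 + k_b \rho^{\text{bias}}_t)$ and observe that, because $k_b > 0$ and $\rho^{\text{bias}}_t = |b^{\text{TD}}_t| / (\varepsilon + \nu^{\text{TD}}_t) \ge 0$ by \eqref{eq:td_bias_ratio}, the gating factor lies in $(0,1]$. Hence the effective policy learning rate never exceeds the base rate $\alpha_{\pi}$. This is the same mechanism already used in Lemma~1 for the HSAO step size, so I would state it as an immediate consequence of the form $(1 + c\rho)^{-1}$ with $c,\rho \ge 0$.

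Next I would invoke the two boundedness hypotheses in the lemma statement. By assumption there is a constant $A < \infty$ with $|\hat{A}_t| \le A$ and a constant $G < \infty$ with $\|\nabla_\theta \log \pi_\theta(a_t \mid s_t)\| \le G$ for all $t$ (the latter is the ``bounded gradients'' assumption, and bounded advantages follows from the bounded-rewards/bounded-TD-error assumption in the appendix together with any standard advantage estimator). Then, taking norms in \eqref{eq:policy_update} and using the triangle inequality together with the bound on the gate,
\begin{equation}
\|\theta_{t+1} - \theta_t\|
= \frac{\alpha_{\pi}}{1 + k_b \rho^{\text{bias}}_t}\,
|\hat{A}_t|\,\|\nabla_\theta \log \pi_\theta(a_t \mid s_t)\|
\le \alpha_{\pi} A G,
\end{equation}
which is a finite constant independent of $t$. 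This establishes that the per-iteration policy update has uniformly bounded step size, as claimed.

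I do not expect a genuine obstacle here, since the argument is a one-line composition of (i) the gate bound from the structure $(1+k_b\rho)^{-1}\in(0,1]$ and (ii) the boundedness assumptions. The only point requiring a word of care is making explicit that bounded advantages is not an extra hypothesis but a consequence of the standing assumptions listed in the appendix: bounded rewards and bounded TD errors imply bounded returns and hence bounded advantage estimates for the usual estimators (one-step TD, GAE with $\lambda\in[0,1]$, or empirical returns over a fixed horizon). I would therefore add a half-sentence reconciling the lemma's phrasing with the appendix assumptions, then close with the displayed bound $\alpha_{\pi} A G$.
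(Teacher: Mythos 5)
Your proposal is correct and follows essentially the same route as the paper's proof: bound the gate $(1+k_b\rho^{\text{bias}}_t)^{-1}$ in $(0,1]$ and then multiply by the assumed bounds on the advantage and the score-function gradient to obtain $\alpha_\pi A G$. The explicit constant and the half-sentence tracing bounded advantages back to the bounded-reward/TD-error assumptions are small refinements of, not departures from, the paper's argument.
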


\begin{proof}
The gating factor $(1 + k_b \rho^{\text{bias}}_t)^{-1}$ lies in $(0,1]$. Under bounded advantage estimates and gradients, the policy update magnitude is bounded by the base learning rate $\alpha_\pi$ times a finite constant.
\end{proof}

\subsection{Bounded Learned Updates in MLLP}

\begin{proposition}
Assume the learned optimizer outputs bounded step sizes and gating coefficients. Then the inner-loop updates in \eqref{eq:mllp_update}--\eqref{eq:mllp_update_final} are bounded.
\end{proposition}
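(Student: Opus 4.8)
The plan is to bound $\|\Delta\theta_t\| = \|\theta_{t+1}-\theta_t\|$ directly from the representative parameterization \eqref{eq:mllp_update_final}; for the generic form \eqref{eq:mllp_update} the claim is immediate, since a bounded output of $f_\Phi$ is by definition a bounded displacement. In the substantive case the hypothesis reads $0 < \alpha^{\text{meta}}_t \le \alpha_{\max}$ and $\|\omega_t\|_\infty, \|\zeta_t\|_\infty \le 1$. Writing $u_t = g_t/(\sqrt{v_t}+\varepsilon)$ and $\tilde u_t = \tilde g_t/(\sqrt{v_t}+\varepsilon)$, the triangle inequality together with the elementary estimate $\|a \odot b\| \le \|a\|_\infty \|b\|$ gives
\[
\|\Delta\theta_t\| = \alpha^{\text{meta}}_t \bigl\| \omega_t \odot u_t + \zeta_t \odot \tilde u_t \bigr\|
\le \alpha_{\max}\bigl( \|\omega_t\|_\infty \|u_t\| + \|\zeta_t\|_\infty \|\tilde u_t\| \bigr)
\le \alpha_{\max}\bigl( \|u_t\| + \|\tilde u_t\| \bigr),
\]
so it remains to bound the two normalized-gradient norms by finite constants.

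For the first term I would use the self-normalizing property of the Adam-style second-moment estimate: since $v_{t,i} \ge (1-\eta) g_{t,i}^2$ coordinatewise (with $\eta\in(0,1)$ by the smoothing-parameter assumption, and $v_0 \ge 0$), we get $|g_{t,i}|/(\sqrt{v_{t,i}}+\varepsilon) \le |g_{t,i}|/\sqrt{v_{t,i}} \le 1/\sqrt{1-\eta}$, hence $\|u_t\| \le \sqrt{d/(1-\eta)} =: C_g$; under the bounded-gradient assumption $\|g_t\|\le G$ one may instead take $C_g = G/\varepsilon$. For the corrected gradient, recall from \eqref{eq:directional_correction} that $\tilde g_t = g_t - \tau s_t \frac{\langle g_t, m_t\rangle}{\|m_t\|^2+\varepsilon} m_t$ with $|s_t|\le 1$. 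Cauchy--Schwarz gives $|\langle g_t, m_t\rangle| \le \|g_t\|\|m_t\|$, so the projection term has norm at most $\tau \frac{\|g_t\|\|m_t\|^2}{\|m_t\|^2+\varepsilon} \le \tau \|g_t\|$, whence $\|\tilde g_t\| \le (1+\tau)\|g_t\|$ and, dividing by the same preconditioner, $\|\tilde u_t\| \le (1+\tau)C_g$.

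Combining the estimates yields $\|\theta_{t+1}-\theta_t\| \le \alpha_{\max}(2+\tau) C_g$, a bound uniform in $t$ and consistent with the coarser inequality stated in Section~\ref{sec:mllp}. Since each inner loop runs for a fixed number $K$ of steps, a telescoping argument then gives $\|\theta^{(i)}_K - \theta^{(i)}_0\| \le K\alpha_{\max}(2+\tau)C_g$, so the entire adapted trajectory stays in a bounded ball around the initialization. The only delicate point, and the step I would be most careful about, is controlling $\|u_t\|$ and $\|\tilde u_t\|$ without circularity: the lower bound $v_{t,i}\ge(1-\eta)g_{t,i}^2$ yields a clean gradient-magnitude-free constant at the cost of a $\sqrt{d}$ factor, while the alignment correction is well-behaved only because of the $\varepsilon$ regularizer in its denominator and the hard bound $|s_t|\le 1$; both facts should be invoked explicitly rather than absorbed into "standard assumptions."
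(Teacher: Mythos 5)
Your argument is correct and follows the same route as the paper's own proof sketch: bound the gated combination by the triangle inequality using $\alpha^{\text{meta}}_t \le \alpha_{\max}$ and the unit bounds on $\omega_t,\zeta_t$, then observe that the preconditioned gradients are bounded. You simply make explicit the details the paper absorbs into ``gradients and moment estimates are bounded'' --- namely the coordinatewise bound $v_{t,i}\ge(1-\eta)g_{t,i}^2$ (or the $\varepsilon$ regularizer) and the Cauchy--Schwarz estimate $\|\tilde g_t\|\le(1+\tau)\|g_t\|$ --- which strengthens rather than changes the argument.
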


\begin{proof}
By assumption, $\alpha^{\text{meta}}_t \le \alpha_{\max}$ and gating coefficients are bounded in $[-1,1]$. Since gradients and moment estimates are bounded, the resulting update magnitude is bounded uniformly across inner-loop steps.
\end{proof}

These results collectively establish that diagnostic-driven adaptation preserves stability by construction and cannot amplify update magnitudes beyond the chosen base scales.

\section{Representative Implementation Parameters}

This appendix reports representative hyperparameter ranges used in illustrative experiments. These ranges are not claimed to be optimal, but demonstrate that the proposed diagnostic-driven framework operates robustly across broad parameter regimes and can be reused without extensive retuning.

\paragraph{Diagnostic smoothing parameters.}
\begin{itemize}[leftmargin=*,noitemsep]
\item Bias EMA coefficient: $\alpha \in [0.01, 0.05]$
\item Noise EMA coefficient: $\beta \in [0.01, 0.05]$
\item Variance EMA coefficient: $\zeta \in [0.01, 0.05]$
\item Alignment smoothing coefficient: $\lambda \in [0.01, 0.1]$
\end{itemize}

\paragraph{Diagnostic gating coefficients.}
\begin{itemize}[leftmargin=*,noitemsep]
\item Bias gate sensitivity: $k_b \in [0.5, 5]$
\item Noise gate sensitivity: $k_n \in [0.5, 5]$
\item Alignment correction strength: $\tau \in [0, 0.5]$
\end{itemize}

\paragraph{Adaptive entropy regulation (HED-RL).}
\begin{itemize}[leftmargin=*,noitemsep]
\item Base entropy coefficient: $\beta_0 \in [10^{-4}, 10^{-2}]$
\item Bias sensitivity coefficient: $\lambda_b \in [0, 1]$
\item Noise sensitivity coefficient: $\lambda_n \in [0, 1]$
\end{itemize}

These hyperparameters govern the responsiveness of diagnostic-driven adaptation to persistent drift, stochastic variability, and directional repetition. Across all evaluated settings, the framework exhibited low sensitivity to precise values within the reported ranges, supporting its practical reusability across
tasks and learning paradigms.

\end{document}